\documentclass{article} 
\usepackage{hyperref}
\usepackage[accepted]{icml2020}

\usepackage{multirow}
\usepackage{amsmath}
\usepackage{amsfonts}
\usepackage{amssymb}
\usepackage{amsthm}

\usepackage{times}
\usepackage{url}
\usepackage{mathrsfs}
\usepackage{booktabs}

\usepackage{mathdef}

\usepackage{lipsum} 
\usepackage{graphicx}

\newtheorem{theorem}{Theorem}

\newtheorem{lemma}{Lemma}

\global\long\def\L{L(\x,\vv\theta)}
\global\long\def\Lprime{L(\x_i^\prime,\vv\theta)}

\global\long\def\Laug{\hat L_{\P, m}(\x, \vv\theta)}

\icmltitlerunning{MaxUp: A Simple Way to Improve Generalization of 
	Neural Network Training} 

\begin{document}
\twocolumn[

\icmltitle{MaxUp: 
A Simple Way to Improve Generalization of 
Neural Network Training}
\icmlsetsymbol{equal}{*}
\begin{icmlauthorlist}
\icmlauthor{Chengyue Gong$^*$}{ut}
\icmlauthor{Tongzheng Ren$^*$}{ut}
\icmlauthor{Mao Ye}{ut}
\icmlauthor{Qiang Liu}{ut}
\end{icmlauthorlist}
\icmlaffiliation{ut}{UT Austin}
\icmlcorrespondingauthor{Chengyue Gong}{cygong@cs.utexas.edu}
\vskip 0.3in
]

\printAffiliationsAndNotice{\icmlEqualContribution}
\date{\today}

\begin{abstract}
We propose \emph{MaxUp}, an embarrassingly simple, highly effective technique for improving the generalization performance of machine learning models, especially  
deep neural networks. 
The idea is to generate a set of augmented data with some random perturbations or transforms, and minimize the maximum, or worst case loss over the augmented data. 
By doing so, we implicitly introduce a smoothness or robustness regularization against the random perturbations, and hence improve the generation performance. 
For example, in the case of Gaussian perturbation, 
 \emph{MaxUp} is asymptotically equivalent to using 
 the  gradient norm of the loss as a penalty to encourage smoothness.
We  test \emph{MaxUp} on a range of tasks, 
including image classification, language modeling, and adversarial  certification, on which \emph{MaxUp} consistently outperforms the existing best baseline methods, without introducing substantial computational overhead. 
In particular,  we improve ImageNet classification from the  state-of-the-art top-1 accuracy $85.5\%$ without extra data to $85.8\%$. 
Code will be released soon.
\end{abstract}

\section{Introduction}

A central theme of machine learning
is to alleviate the issue of overfitting, 
improving the generalization performance on testing data. 
This is often achieved by leveraging important prior knowledge of the models and data of interest. 
For example, the
regularization-based methods introduce penalty on the complexity of the model, 
which often amount to enforcing certain smoothness properties. 
Data augmentation techniques, on the other hand, 
leverage important invariance properties of the data 
(such as the shift and rotation invariance of images) 
to improve performance. 
Novel approaches that exploit important knowledge of the models and data hold the potential of substantially improving the performance of machine learning systems.

We propose \emph{MaxUp}, a simple yet powerful training method
to improve the generalization performance
and alleviate the over-fitting issue. Different from standard methods that minimize the average risk on the observed data, \emph{MaxUp} generates a set of random perturbations or transforms of each observed data point, and minimizes the average risk of  the \emph{worst} augmented data of each data point. 
This allows us to enforce robustness against the random perturbations and transforms, 
and hence improve the generalization performance. \emph{MaxUp} can easily leverage arbitrary  state-of-the-art data augmentation schemes \citep[e.g.][]{zhang2018mixup, devries2017cutout, cubuk2018autoaugment}, and substantially improves  over them 
by minimizing the worst (instead of average) risks on the augmented data, without adding significant computational ahead. 

Theoretically,
in the case of Gaussian perturbation, 
we show that 
\emph{MaxUp} effectively introduces 
a \emph{gradient-norm regularization term} that serves to encourage smoothness of the loss function, 
which does not appear in  standard data augmentation methods that minimize the average risk. 

\emph{MaxUp} can be viewed as a ``lightweight'' variant of adversarial training against adversarial input pertubrations \citep[e.g.][]{tramer2017ensemble, madry2017towards}, 
but is mainly designed to improve the generalization on the clean data, instead of robustness on perturbed data (although \emph{MaxUp} does also increase the adversarial robustness in Gaussian adversarial certification as we shown in our experiments (Section~\ref{sec:adversarial})).   
In addition, 
compared with standard adversarial training methods such as projected gradient descent (PGD) \citep{madry2017towards},  
\emph{MaxUp} is 
much simpler and computationally much faster, and can be easily adapted to increase various  robustness defined by the corresponding data augmentation schemes. 

We test \emph{MaxUp} on three challenging tasks: image classification, language modeling, and certified defense  against adversarial examples~\citep{cohen2019certified}. 
We find that \emph{MaxUp} can leverage the 
different state-of-the-art 
 data augmentation methods and boost their performance to achieve new state-of-the-art on a range of tasks, datasets, and neural architectures. 
In particular, 
we set up a new state-of-the-art result on ImageNet classification without extra data,  
which improves the best $85.5\%$ top1 accuracy 
 by \citet{xie2019adversarial} to $85.8\%$. 
For the adversarial certification task, we find \emph{Maxup} allows us to train more verifiably robust classifiers than 
prior arts such as the PGD-based adversarial training proposed by \citet{salman2019provably}.


\begin{algorithm*}
\caption{\emph{MaxUp} with Stochastic Gradient Descent}
\begin{algorithmic}
\STATE {\bfseries Input:} Dataset $\mathcal{D}_n = \{\x_i\}_{i=1}^n$; transformation distribution $\mathbb{P}(\cdot|\x)$; number of augmented data $m$; initialization $\th_0$;  
SGD parameters (batch size, step size $\eta$, etc).
\REPEAT 
    \STATE Draw a mini-batch $\mathcal M$ from $\Dn$, 
    and update $\th$ via 
    \begin{align*}
        \th \gets \th - \eta 
         \E_{\x \sim \mathcal M} 
        \left [
        \nabla_{\th} \left ( \max_{i\in [m]} L(\x_i^\prime, \vv\theta)\right ) \right], 
    \end{align*}
    where $\{\x_{i}'\}_{i=1}^m$ are drawn \emph{i.i.d.} from $\mathbb{P}(\cdot|\x)$ for each $\x$ in the mini batch $\mathcal M$. See Equation~\ref{equ:grad}. 
\UNTIL{convergence}
\label{alg:main}
\end{algorithmic}
\end{algorithm*}

\section{Main Method} 
\label{sec:method}
We start with introducing the main idea of \emph{MaxUp}, and then discuss its effect of introducing smoothness regularization in Section~\ref{sec:smooth}.  

\paragraph{ERM}
Giving a dataset $\Dn =\{\x_i\}_{i=1}^n$, 
learning often reduces to a form of 
empirical risk minimization (ERM):
\begin{align}
\min_{\th} \E_{\x\sim\Dn} \left [L(\x, \vv\theta) \right],
\end{align}
where $\th$ is a parameter of interest (e.g., the weights of a neural network), and $L(\x, \vv\theta)$ denotes the loss associated with data point $\x$. 
A key issue of ERM is the risk of overfitting, 
especially when the data information is insufficient. 


\paragraph{\emph{MaxUp}} 
We propose \emph{MaxUp} to alleviate overfitting. 
The idea is to generate a set of random augmented data and minimize the maximum loss over the augmented data. 

Formally, for each data point $\x$ in $\Dn$, 
we generate a set of perturbed data points $\{\x_i'\}_{i=1}^m$ that are similar to $\x$, and estimate $\th$  
by minimizing the maximum loss over $\{\x_i'\}$: 
\begin{align}\label{equ:maxup1}
\text{\emph{MaxUp:}} &&   \min_{\th} 
\mathbb{E}_{\x\sim \Dn} 
    \left [ \max_{i\in [m]} L(\x_i^\prime, \th)
    \right].
\end{align}
This loss can be easily minimized with stochastic gradient descent (SGD). Note that the gradient of the maximum loss is simply the gradient of the worst copy, that is, 
\begin{align} \label{equ:grad}
\nabla_{\vv\th} \left(\max_{i\in [m]} L(\x_i^\prime, \th)\right)
= \nabla_{\vv\th} L(\x_{i^*}^\prime, \th),
\end{align}
where $i^* =\argmax_{i\in [m]} L(\x_i^\prime, \th)$. 
This yields a simple and practical algorithm shown in Algorithm~\ref{alg:main}. 

In our work, we assume the augmented data $\{\x_i'\}_{i=1}^m$ is \emph{i.i.d.} generated from a  distribution $\P(\cdot|\x)$. 
The $\P(\cdot|\x)$ can be based on small perturbations around $\x$, e.g., $\P(\cdot|\x) = \normal(\x, \sigma^2 \vv I)$, the Gaussian distribution with mean $\x$ and isotropic variance $\sigma^2$. 
The $\P(\cdot |\x)$ can also be 
constructed based on invariant data transformations that are widely used in the data  augmentation literature,  
such as random crops, equalizing, rotations, and clips for images
\citep[see e.g][]{cubuk2018autoaugment, devries2017cutout, cubuk2019randaugment}. 

\subsection{\emph{MaxUp} as a Smoothness Regularization}\label{sec:smooth} 
We provide a theoretical interpretation of \emph{Maxup} 
as introducing a \emph{gradient-norm regularization} to the original ERM objective to encourage smoothness. 
Here we consider the simple case of isotropic Gaussian perturbation, when 
$\mathbb{P}(\cdot|\x) = \normal(\x, \sigma^2 \vv I)$. 
For simplifying notation, we define 
\begin{align} \label{equ:maxup_risk}
\tilde L_{\P, m}(\x, \vv\theta) := \E_{\{\x_i'\}_{i=1}^m\sim \P(\cdot|\x)^m} \left [ \max_{i\in [m]}
L(\x_i^\prime, \vv\theta)
\right],
\end{align}
which represents the expected \emph{MaxUp} risk of data point $\x$ with $m$ augmented copies. 



\begin{theorem}[MaxUp as Gradient-Norm Regularization]
Consider $\tilde{L}_{\P, m}(\x, \vv\theta)$ defined in \eqref{equ:maxup_risk} with $\mathbb{P}(\cdot|\x) = \normal(\x, \sigma^2\vv  I)$. 
Assume $L(\x, \vv\theta)$ is second-order differentiable w.r.t. $\x$. 
Then \begin{align}
\tilde{L}_{\P, m} &(\x, \vv\theta)  = \nonumber L(\x, \vv\theta) + c_{m, \sigma} \left \| \nabla_{\x} L(\x, \vv\theta) \right\|_2 + \mathbf O(\sigma^2), 
\end{align}
\label{thm:smooth_regularization}
where $c_{m,\sigma}$ is a constant and 
$c_{m, \sigma} =\mathbf \Theta( \sigma \sqrt{\log m})$, where $\mathbf \Theta(\cdot)$ denotes the big-Theta notation. 

\end{theorem}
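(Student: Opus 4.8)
The plan is to reduce the statement to a Taylor expansion of the loss in the perturbation, followed by an exact computation of the expected maximum of the resulting first-order Gaussian terms. Write each augmented point as $\x_i' = \x + \sigma\vv\epsilon_i$ with $\vv\epsilon_i \sim \normal(\vv 0,\vv I)$ drawn i.i.d. Since $L(\x,\vv\theta)$ is twice differentiable in $\x$, a second-order Taylor expansion around $\x$ gives
\begin{align*}
L(\x_i',\vv\theta) = L(\x,\vv\theta) + \sigma\,\vv\epsilon_i^\top\nabla_\x L(\x,\vv\theta) + \frac{\sigma^2}{2}\,\vv\epsilon_i^\top\nabla_\x^2 L(\boldsymbol\xi_i,\vv\theta)\,\vv\epsilon_i,
\end{align*}
for some intermediate point $\boldsymbol\xi_i$ on the segment between $\x$ and $\x_i'$. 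The leading behavior of the maximum is thus governed by the linear terms $\vv\epsilon_i^\top\nabla_\x L$, and the goal is to show that the quadratic remainder contributes only at order $\mathbf O(\sigma^2)$.

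Next I would separate the remainder using the $1$-Lipschitz property of the maximum, namely $|\max_i(a_i+c_i)-\max_i a_i|\le\max_i|c_i|$ for any reals $a_i,c_i$. Taking $a_i=\sigma\,\vv\epsilon_i^\top\nabla_\x L$ and $c_i$ equal to the quadratic remainder, and then taking expectations, yields
\begin{align*}
\tilde L_{\P,m}(\x,\vv\theta) = L(\x,\vv\theta) + \sigma\,\E\Big[\max_{i\in[m]}\vv\epsilon_i^\top\nabla_\x L(\x,\vv\theta)\Big] + R,
\end{align*}
with the remainder obeying $|R|\le\frac{\sigma^2}{2}\,\E[\max_{i\in[m]}|\vv\epsilon_i^\top\nabla_\x^2 L(\boldsymbol\xi_i,\vv\theta)\vv\epsilon_i|]$. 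Bounding the Hessian locally by a constant $M$ gives $|R|\le\frac{\sigma^2}{2}M\,\E[\max_{i\in[m]}\|\vv\epsilon_i\|_2^2]$, and since the maximum of $m$ copies of a $\chi^2_d$ variable has finite mean for fixed $m$ and dimension $d$, this is exactly $\mathbf O(\sigma^2)$.

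For the surviving linear term, the key observation is that for a fixed gradient the scalar $\vv\epsilon_i^\top\nabla_\x L$ is distributed as $\|\nabla_\x L\|_2\,g_i$ with $g_i\sim\normal(0,1)$, because the inner product of an isotropic Gaussian with any fixed unit vector is standard normal regardless of its direction. As the $\vv\epsilon_i$ are independent, so are the $g_i$, and hence
\begin{align*}
\E\Big[\max_{i\in[m]}\vv\epsilon_i^\top\nabla_\x L(\x,\vv\theta)\Big] = \big\|\nabla_\x L(\x,\vv\theta)\big\|_2\cdot\E\Big[\max_{i\in[m]}g_i\Big].
\end{align*}
This factorization is precisely what makes $c_{m,\sigma}:=\sigma\,\E[\max_{i\in[m]}g_i]$ a genuine constant, independent of $\x$ and $\vv\theta$. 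Substituting into the previous display proves the expansion, and to obtain $c_{m,\sigma}=\mathbf\Theta(\sigma\sqrt{\log m})$ I would invoke the classical extreme-value estimate $\E[\max_{i\in[m]}g_i]=\mathbf\Theta(\sqrt{\log m})$, whose upper bound follows from a union/Chernoff argument on the Gaussian tail and whose matching lower bound follows from a standard second-moment (or Sudakov-type) argument.

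The step I expect to be the main obstacle is making the remainder control fully rigorous rather than heuristic: the intermediate point $\boldsymbol\xi_i$ is random and correlated with $\vv\epsilon_i$, so the local Hessian bound must hold uniformly over a neighborhood of $\x$ whose radius grows with $\|\vv\epsilon_i\|_2$, and one must verify that the heavy realizations of $\max_i\|\vv\epsilon_i\|_2$ do not land where the bound fails. The cleanest fix is to assume a neighborhood-uniform (or global) bound on $\|\nabla_\x^2 L\|$, under which the argument goes through directly; otherwise one splits the expectation according to whether $\max_i\|\vv\epsilon_i\|_2$ exceeds a slowly growing threshold and controls the complement via Gaussian concentration.
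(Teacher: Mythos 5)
Your proposal follows essentially the same route as the paper's own proof: a first-order Taylor expansion of $L(\x_i',\vv\theta)$ around $\x$, followed by the observation that $\E[\max_{i\in[m]}\langle\nabla_\x L,\vv z_i\rangle]$ factorizes as $c_{m,\sigma}\|\nabla_\x L\|_2$ with $c_{m,\sigma}=\E[\max_i y_i]=\mathbf\Theta(\sigma\sqrt{\log m})$ for i.i.d.\ scalar Gaussians $y_i$, which is exactly the paper's Lemma~\ref{lem:maxGauss}. The only difference is that you are more careful than the paper about justifying the $\mathbf O(\sigma^2)$ remainder (via the $1$-Lipschitz property of the max and a uniform Hessian bound), a step the paper's proof passes over silently and which, as you correctly note, requires slightly more than bare second-order differentiability.
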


Theorem \ref{thm:smooth_regularization} shows that, the expected \emph{MaxUp} risk can be viewed as introducing a
Lipschitz-like regularization with the gradient norm $\|\nabla_x \L\|_2$, which encourages the smoothness of $\L$ w.r.t. the input $\x$. 
The strength of the regularization is controlled by $c_{m,\sigma}$, which depends on the number of samples $m$ and perturbation magnitude  $\sigma$. 
\begin{proof}
Using Taylor expansion, we have 
\begin{align*}
    & \tilde L_{\mathbb{P},m}(\x, \th)  \\
    &=  
    \E
    \left [\max_{i\in [m]} L (\x_i^\prime, \vv\theta) \right ]\\
    & = L (\x, \vv\theta) + \mathbb{E}
    \left [ \max_{i\in [m]} \left(L (\x_i^\prime, \vv\theta) - L (\x, \vv\theta)\right) \right ]\\
     &= \L +  \mathbb{E}
     \left[ \max_{i\in[q]}\langle \nabla_{\x} \L, \vv z_i\rangle \right]+ \mathbf O(\sigma^2),
\end{align*} 
where we assume $\vv z_i = \x_i' - \x$, which follows $\normal(0, \sigma^2 \vv I)$. 
The rest of the proof is 
due to the Lemma~\ref{lem:maxGauss} below. 
\end{proof}


\begin{lemma}\label{lem:maxGauss}
Let $\vv g$  be a fixed vector in $\R^d$,
and $\{\vv z_i\}_{i=1}^m$ are $m$ i.i.d. random variables from $\normal(0, \sigma^2 \vv I)$. 
We have 
\begin{align*}
    \mathbb{E} \left [ \max_{i\in[m]}\langle \vv g, \vv z_i\rangle \right ] 
    = c_{m, \sigma} \|\vv g\|_2, 
\end{align*}
where $c_{m, \sigma} 
= \mathbf \Theta\left (\sigma \sqrt{\log m}\right).$ 
\end{lemma}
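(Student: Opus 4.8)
The plan is to exploit rotational invariance to reduce the $d$-dimensional problem to a one-dimensional one, and then invoke the standard asymptotics for the expected maximum of i.i.d. Gaussians. First I would observe that each inner product $\langle \vv g, \vv z_i \rangle$ is a one-dimensional Gaussian: since $\vv z_i \sim \normal(0, \sigma^2 \vv I)$, the random variable $\langle \vv g, \vv z_i \rangle$ has mean $0$ and variance $\sigma^2 \|\vv g\|_2^2$, so we can write $\langle \vv g, \vv z_i \rangle = \sigma \|\vv g\|_2 \, \xi_i$ where $\xi_1, \dots, \xi_m$ are i.i.d. standard normal $\normal(0,1)$. (Here the $\xi_i$ are genuinely independent because the $\vv z_i$ are, even though for fixed $i$ the scalar $\langle \vv g, \vv z_i\rangle$ depends on the whole vector $\vv z_i$.) Pulling the nonnegative constant $\sigma \|\vv g\|_2$ out of the maximum gives
\begin{align*}
\mathbb{E}\left[ \max_{i \in [m]} \langle \vv g, \vv z_i \rangle \right] = \sigma \|\vv g\|_2 \, \mathbb{E}\left[ \max_{i \in [m]} \xi_i \right],
\end{align*}
so setting $c_{m,\sigma} := \sigma \, \mathbb{E}[\max_{i\in[m]} \xi_i]$ already yields the claimed identity $\mathbb{E}[\max_i \langle \vv g, \vv z_i\rangle] = c_{m,\sigma}\|\vv g\|_2$ with $c_{m,\sigma}$ a constant (depending only on $m$ and $\sigma$, not on $\vv g$ or $d$).

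It then remains to establish $\mathbb{E}[\max_{i\in[m]} \xi_i] = \mathbf\Theta(\sqrt{\log m})$ for i.i.d. standard normals. For the upper bound I would use the classical sub-Gaussian maximal inequality: a union bound combined with the Gaussian tail bound $\Pr(\xi_i > t) \le e^{-t^2/2}$, optimized over $t$ (or, equivalently, Jensen applied to the moment generating function $\mathbb{E}[e^{\lambda \max_i \xi_i}] \le m \, e^{\lambda^2/2}$ followed by choosing $\lambda = \sqrt{2\log m}$), gives $\mathbb{E}[\max_i \xi_i] \le \sqrt{2 \log m}$. For the matching lower bound $\mathbb{E}[\max_i \xi_i] \ge c\sqrt{\log m}$ for some absolute constant $c > 0$ and all $m$ large enough, I would use the Sudakov-type lower bound, or more elementarily the Paley–Zygmund / second-moment argument: estimate $\Pr(\max_i \xi_i \ge t)$ from below using $\Pr(\max_i \xi_i \ge t) = 1 - (1 - \Pr(\xi_1 \ge t))^m$ together with the lower Gaussian tail bound $\Pr(\xi_1 \ge t) \ge \frac{1}{2}\cdot\frac{t}{t^2+1}\cdot\frac{1}{\sqrt{2\pi}}e^{-t^2/2}$, and choose $t$ of order $\sqrt{\log m}$ (say $t = \sqrt{\log m}$) so that $m\Pr(\xi_1 \ge t) \to \infty$, forcing $\Pr(\max_i \xi_i \ge t) \to 1$ and hence $\mathbb{E}[\max_i \xi_i] \ge t(1-o(1))$. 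Combining the two bounds gives $\mathbb{E}[\max_i \xi_i] = \mathbf\Theta(\sqrt{\log m})$, and therefore $c_{m,\sigma} = \sigma\,\mathbf\Theta(\sqrt{\log m}) = \mathbf\Theta(\sigma\sqrt{\log m})$, as required.

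The main obstacle, such as it is, is the lower bound on $\mathbb{E}[\max_i \xi_i]$: the upper bound and the reduction to one dimension are routine, but the lower bound requires a slightly careful tail computation because one cannot simply drop terms, and one has to be mindful that the $\mathbf\Theta$ claim is an asymptotic statement in $m$ (the constant is not claimed uniformly for small $m$, where indeed $\mathbb{E}[\max_i\xi_i]$ can be small or, for $m=1$, zero). Handling this cleanly just amounts to citing the well-known fact that $\mathbb{E}[\max_{i\in[m]}\xi_i]/\sqrt{2\log m} \to 1$ as $m\to\infty$, or reproducing the short second-moment argument sketched above; either way the constants are absolute and the $\sigma$ and $\|\vv g\|_2$ dependence is exact, not merely asymptotic.
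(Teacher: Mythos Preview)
Your proposal is correct and follows essentially the same route as the paper: reduce $\langle \vv g,\vv z_i\rangle$ to a one-dimensional Gaussian by factoring out $\|\vv g\|_2$, then invoke the standard $\mathbf\Theta(\sqrt{\log m})$ asymptotics for the expected maximum of $m$ i.i.d.\ Gaussians. The only difference is that the paper simply cites the literature (obtaining explicit constants $0.23\sigma\sqrt{\log m}\le c_{m,\sigma}\le \sqrt{2}\,\sigma\sqrt{\log m}$), whereas you sketch the upper and lower tail arguments yourself; both are fine.
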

\begin{proof}
Define $y_i = \langle \vv g, \vv z_i\rangle/\norm{\vv g}_2$. 
Then $\{y_i\}_{i=1}^m$ is \emph{i.i.d.} from $\normal(0, \sigma^2)$. 
Therefore, $c_{m,\sigma}=\E[\max_{i\in [m]} y_i]$, 
which is well known to be $\mathbf \Theta(\sigma\sqrt{\log m})$.
See e.g., \citet{orabona2015optimal, kamath2015bounds} for bounds related to $\E[\max_{i\in [m]} y_i]$. 
More specifically, 
we have 
$0.23\sigma \sqrt{\log m} \leq  c_{m,\sigma} \leq \sqrt{2}\sigma \sqrt{\log m}$ following \citet{kamath2015bounds}. 
\end{proof}



\section{Related Methods and Discussion}
\emph{MaxUp} is closely related 
to both data augmentation and adversarial training.  
It can be viewed as an \emph{adversarial variant of data augmentation}, in that it minimizes the worse case loss on the perturbed data, instead of an average loss like typical data augmentation methods. 
\emph{MaxUp} can also be viewed as a \emph{``lightweight'' variant of adversarial training}, in that
the maximum loss is calculated by simple random sampling, instead of more accurate gradient-based optimizers for finding the adversarial loss, 
such as projected gradient descent (PGD);  
\emph{MaxUp} is much simpler and faster than the PGD-based adversarial training, and is more suitable for our purpose of alleviating over-fitting on clean data (instead of adversarial defense). 
We now elaborate on these connections in depth. 
\subsection{Data Augmentation}
Data augmentation has been widely used in machine learning, especially on image data which admits a rich set of invariance transforms (e.g. translation, rotation, random cropping). 
Recent augmentation techniques, such as  MixUp~\citep{zhang2018mixup}, 
CutMix~\citep{yun2019cutmix} and manifold MixUp~\citep{verma2018manifold} 
have been found highly useful in training deep neural networks, 
especially in achieving state-of-the-art results on important image classification benchmarks such as SVHN, CIFAR and ImageNet.  
More recently,
more advanced methods have been developed to 
find the optimal data augmentation policies 
using reinforcement learning or adversarial generative network~\citep[e.g.][]{cubuk2018autoaugment, cubuk2019randaugment, zhang2019adversarial}.

\emph{MaxUp} can easily leverage these advanced
 data augmentation techniques to achieve good performance. 
 The key difference, however, is that \emph{MaxUp} in \eqref{equ:maxup1} minimizes the \emph{maximum loss} on the augmented data, while typical data augmentation methods minimize the \emph{average loss}, that is, 
\begin{align}\label{equ:SumUp}
    \min_{\th}\mathbb{E}_{ \x \sim \Dn} 
    \left [ \frac{1}{m}\sum_{i=1}^m \Lprime \right],
\end{align}
which we refer to as \emph{standard data augmentation} throughout the paper. 
It turns out \eqref{equ:maxup1} and \eqref{equ:SumUp} 
behave very different as regularization mechanisms, in that \eqref{equ:SumUp} does not introduce the gradient-norm regularization as \eqref{equ:maxup1},  
and hence does not have the benefit of having gradient-norm regularization. 
This is because the first-order term in the Taylor expansion is canceled out due to the averaging in \eqref{equ:SumUp}.  

Specifically, 
let $\mathbb{P}(\cdot|\x)$ be any distribution whose expectation is $\x$ and $\L$ is second-order differentiable w.r.t $\x$. 
Define the expected loss related to \eqref{equ:SumUp} on data point $\x$: 
\begin{align}
    \Laug := \E_{\{\x_i'\}_{i=1}^m\sim \P(\cdot|\x)^m} \left [ \frac{1}{m} \sum_{i=1}^m \Lprime \right]. 
\end{align}
Then with a simple Taylor expansion, we have 
\begin{align}
    \Laug = \nonumber \L + \mathbf O(\sigma^2),  
\end{align}
which misses the gradient-norm regularization term when compared with \emph{MaxUp} decomposition in Theorem~\ref{thm:smooth_regularization}. 

Note that the \emph{MaxUp} update is computationally \emph{ faster} than the 
solving \eqref{equ:SumUp} with the same $m$,
because we only need to backpropagate on the worst augmented copy for each data point (see Equation~\ref{equ:grad}),  
while solving \eqref{equ:SumUp} requires to backpropagate 
on all the $m$ copies at each iteration.

\subsection{Adversarial Training} 
Adversarial training has been developed to 
defense various adversarial attacks on the data inputs \citep{madry2017towards}.
It estimates $\th$ by solving the following problem:
\begin{align}\label{equ:adv}
    \min_\th \mathbb{E}_{\vv{x}\sim \mathcal{D}_n}  
    \left [ \max_{\vv x^\prime \in \mathcal{B}(\vv x, r)} L(\x^\prime, \th) \right] ,
\end{align}
where $\mathcal{B}(\x, r)$ represents a ball centered at $\x$ with radius $r$ under some metrics (e.g. $\ell_0$, $\ell_1$, $\ell_2$, or $\ell_{\infty}$ distances). The inner maximization is often solved by running projected gradient descent (PGD) for a number of iterations. 

\emph{MaxUp} in \eqref{equ:maxup1}  can be roughly viewed as 
solving the inner adversarial maximization problem in \eqref{equ:adv} 
using a ``{mild}'', or {``lightweight''}  optimizer by randomly drawing $m$ points from $\P(\cdot | \x)$ and finding the best. 
Such mild adversarial optimization increases the robustness against the random perturbation it introduces, 
and hence enhance the generalization performance. 
Adversarial ideas have also been used to improvement generalization  in a series of recent works \citep[e.g.,][]{xie2019adversarial, zhu2019freelb}.  

Different from our method, 
typical adversarial training methods, especially these based PGD~\citep{madry2017towards},   
tend to 
solve the adversarial  optimization much more \emph{aggressively} to achieve higher robustness, but at the cost of scarifying the accuracy on clean data.  
There has been shown a clear trade-off between the 
accuracy of a classifier on clean data and 
its robustness against adversarial attacks \citep[see e.g.,][]{tsipras2018robustness, pmlr-v97-zhang19p, yin2019rademacher, schmidt2018adversarially}. 
By using a mild adversarial optimizer, \emph{MaxUp} strikes a better balance between the
accuracy on clean data and adversarial robustness.  


Besides,  
\emph{MaxUp} is much more computationally efficient than PGD-based adversarial training, because it 
does not introduce additional back-propagation steps as PGD. 
In practice, \emph{MaxUp} can be equipped with various complex data augmentation methods (in which case $\mathbb{P}(\cdot|\x)$ can be discrete distributions), while PGD-based adversarial training mostly focuses on perturbations in $\ell_p$ balls. 


\subsection{Online Hard Example Mining}
Online hard example mining  (OHEM) \citep{shrivastava2016hardmining}
is a training method originally developed for region-based objective detection, 
which improves the performance of neural networks by 
picking the hardest examples within mini batches of stochastic gradient descent (SGD). It can be viewed as
running SGD for minimizing the following expected loss 
$$
\min_{\th}  
\E_{\mathcal M} 
\left [ 
\max_{\x\in \mathcal M} L(\x, \th)
\right ], 
$$
which amounts to randomly picking a mini-batch $\mathcal M$ at each iteration and minimizing the loss of the hardest example within 
$\mathcal M$. 
By doing so, OHEM  
can focus more on the hard examples and hence improves the  performance on borderline cases. This makes OHEM particularly useful for  
class-imbalance tasks, e.g. object detection~\citep{shrivastava2016hardmining}, person re-identification~\citep{luo2019bag}. 
 

Different with \emph{MaxUp},
the hardest examples in OHEM are selected in mini-batches consisting of independently selected examples, with no special correlation or similarity.  
Mathematically, it can be viewed as reweighing the data distribution to emphasize harder instances. This is substantially different from \emph{MaxUp}, which is designed to enforce the robustness against existing random data augmentation/perturbation schemes. 

 \section{Experiments}

We  test our method using both image classification and language modeling
for which 
a variety of strong regularization techniques and data augmentation methods have been proposed.
We show that 
\emph{MaxUp} can outperform all of these methods on the most 
challenging datasets (e.g. ImageNet, Penn Treebank, and Wikitext-2) and  state-of-the-art models (e.g. ResNet, EfficientNet, AWD-LSTM). 
In addition, we apply our method to adversarial certification 
 via Gaussian smoothing \citep{cohen2019certified},  for which we find that \emph{MaxUp} can outperform both the augmented data baseline and PGD-based adversarial training baseline. 

For all the tasks, if training from scratch, we first train the model with standard data augmentation with 5 epochs and then switch to \emph{MaxUp}.

\paragraph{Time and Memory Cost}
\emph{MaxUp} only slightly increase the time and memory cost compared with standard training.  
During \emph{MaxUp}, 
we only need to find the worst instance out of the $m$ augmented copies through forward-propagation, and 
then 
only back-propagate on the worst instance. 
Therefore, the additional cost of \emph{MaxUp} over standard training is $m$ forward-propagation, 
which 
introduces no significant overhead on both memory and time cost.

\subsection{ImageNet}
We evaluate \emph{MaxUp} on  ILSVRC2012, a subset of ImageNet classification dataset~\citep{deng2009imagenet}.
This dataset contains around 1.3 million training images and 50,000 validation images. 
We follow the standard data processing pipeline including scale and aspect ratio distortions, random crops, and horizontal flips in training. During the evaluation, we  only use the single-crop setting.

\begin{table}[t]
    \centering
    \begin{tabular}{l|cc}
        \hline
        Method & Top-1 error & Top-5 error \\
        \hline
        Vanilla\tiny{~\citep{he2016resnet}} & 76.3 & - \\
        \hline
        Dropout\tiny{~\citep{srivastava2014dropout}} & 76.8 & 93.4 \\
        DropPath\tiny{~\citep{larsson2016fractalnet}} & 77.1 & 93.5 \\
        Manifold Mixup\tiny{~\citep{verma2018manifold}} & 77.5 & 93.8 \\
        AutoAugment\tiny{~\citep{cubuk2018autoaugment}} & 77.6 & 93.8 \\
        Mixup\tiny{~\citep{zhang2018mixup}} & 77.9 & 93.9 \\
        DropBlock\tiny{~\citep{ghiasi2018dropblock}} & 78.3 & 94.1 \\
        CutMix\tiny{~\citep{yun2019cutmix}} & 78.6 & 94.0 \\
        \hline
        \emph{MaxUp}+CutMix & \bf{78.9} & \bf{94.2} \\
        \hline
    \end{tabular}
    \caption{Summary of top1 and top5 accuracies on the validation set of ImageNet for ResNet-50.}
    \label{tab:resnet50}
\end{table}

\begin{table*}[h]
    \centering
    \begin{tabular}{l|c|c|cc}
        \hline
        Model & Model Size & FLOPs & +CutMix  (\%) & +\emph{MaxUp}+CutMix (\%)\\
        \hline
        ResNet-101 & 44.55M & 7.85G & 79.83 & \bf{80.26} \\
        \hline
        ProxylessNet-CPU & 7.12M & 481M & 75.32 & \bf{75.65} \\
        ProxylessNet-GPU & 4.36M & 470M & 75.08 & \bf{75.42} \\
        ProxylessNet-Mobile $\times 1.4$ & 6.86M & 603M & 76.71 & \bf{77.17} \\
        \hline
        EfficientNet-B7 & 66.35M & 38.20G & ~85.22$^*$ & ~\bf{85.45}$^*$ \\
        Fix-EfficientNet-B8 & 87.42M & 101.79G & ~85.57$^*$ & ~\bf{85.80}$^*$ \\
        \hline
    \end{tabular}
    \caption{Top1 accuracies of different models on the validation set of ImageNet 2012.
    The ``$*$'' indicates that \emph{MaxUp} is applied to the pre-trained model and trained for $5$ epochs.}
    \label{tab:imagenet}
\end{table*}

\paragraph{Implementation Details} 

We test \emph{MaxUp} 
with $\P(\cdot|\x)$  defined by 
the CutMix  data augmentation technique~\citep{yun2019cutmix} (referred to as \emph{MaxUp}+CutMix). 
CutMix randomly cuts and pasts patches among training images, while the ground truth labels are also mixed proportionally to the area of the patches.
\emph{MaxUp}+CutMix applies CutMix on one image for $m$ times (cutting different randomly sampled patches), 
and select the worst case to do backpropagation.

We test our method on  ResNet-50, ResNet-101~\citep{he2016resnet}, as well as recent energy-efficient  architectures, including ProxylessNet \citep{cai2018proxylessnas} and EfficientNet~\citep{tan2019efficientnet}.
We resize the images to $600 \times 600$ and $845 \times 845$ for EfficientNet-B7 and EfficientNet-B8, respectively~\citep{tan2019efficientnet}, 
for which we process the images with the data processing pipelines proposed by \citet{touvron2019fixing}.
For the other models, 
the input image size is $224 \times 224$.
To save computation resources,
we only fine-tune the pre-trained models with \emph{MaxUp} for a few epochs.  
We set $m = 4$ for \emph{MaxUp} in the ImageNet-2012 experiments unless indicated otherwise. This means that we optimize the worst case in $4$ augmented samples for each image. 

For ResNet-50, ResNet-101 and ProxylessNets, we train the models for 20 epochs with learning rate $10^{-5}$ and batch size $256$ on 4 GPUs for 20 epochs.
For EfficientNet, we fix the parameters in the batch normalization layers and train the other parameters with learning rate $10^{-4}$ and batch size 1000 for 5 epochs.

As shown in Table~\ref{tab:imagenet}, 
for ResNet-50 and ResNet-101, we achieve the best results among all the data augmentation method.
For EfficientNet-B8, we further improve the state-of-the-art result on ImageNet with no extra data.


\paragraph{ResNet-50 on ImageNet}

Table~\ref{tab:resnet50} compares 
a number of state-of-the-art 
 regularization techniques with 
\emph{MaxUp}+CutMix on ImageNet with 
 ResNet-50.\footnote{All the FLOPS and model size reported in this paper is calculated by \url{https://pypi.org/project/ptflops}.} 
We can see that \emph{MaxUp}+CutMix achieves better performance compared to all the strong data augmentation and regularization baselines.
From Table~\ref{tab:resnet50},
we see that CutMix gives the best top1 error ($78.6\%$) among all the augmentation tasks, but our method further improves  it to $78.9\%$.
DropBlock outperforms all the other methods in terms of the top5 error, but by augmenting CutMix with \emph{MaxUp}, we improve the $94.1\%$ top5 error rate obtained by DropbBlock to $94.2\%$.

\paragraph{More Results on Different Architectures}
 Table~\ref{tab:imagenet} shows the result of ImageNet on 
ResNet-101,  ProxylessNet-CPU/GPU/Mobile~\citep{cai2018proxylessnas} and EfficientNet.
We can see that
\emph{MaxUp} consistently improves the results in all these cases.   
On ResNet-101, it improves the $79.83\%$ baseline to $80.26\%$.
On ProxylessNet-CPU and ProxylessNet-GPU, 
\emph{MaxUp} enhances the $75.32\%$ and $75.08\%$ top1 accuracy to $75.65\%$ and $75.42\%$, respectively.
On ProxylessNet-Mobile, we improve the $76.71\%$ top1 accuracy to $77.17\%$.

For EfficientNet-B7, CutMix enhances the original top1 accuracy $85.0\%$~\citep[by][]{tan2019efficientnet} to $85.22\%$.
\emph{MaxUp} further improves the top1 accuracy to $88.45\%$.
On Fix-EfficientNet-B8, \emph{MaxUp} obtains the state-of-the-art $85.80\%$ top1 accuracy.
The previous state-of-the-art top1 accuracy, $85.50\%$, is achieved by EfficientNet-L2.

\subsection{CIFAR-10 and CIFAR-100}
We test \emph{MaxUp} equipped with Cutout \citep{devries2017cutout} on CIFAR-10 and CIFAR-100, 
and denote it by  \emph{MaxUp}+Cutout.
We conduct our method on several  neural architectures, including  ResNet-110~\cite{he2016resnet}, PreAct-ResNet-110~\citep{he2016identity} and WideResNet-28-10~\citep{BMVC2016_87}.
We set $m=10$ for WideResNet and $m=4$ for the other models.
We use the public code\footnote{The code is downloaded from \url{https://github.com/junyuseu/pytorch-cifar-models}}
and keep their hyper-parameters.

\paragraph{Implementation  Details}
For CIFAR-10 and CIFAR-100,
we use the standard data processing pipeline (mirror+ crop) and train the model with 200 epochs. 
All the results reported in this section are averaged over five runs.

We train the models for 200 epochs on the training set with 256 examples per mini-batch, 
and evaluate the trained models on the test set. 
The learning rate starts at 0.1 and is divided by 10 after 100 and 150 epochs for ResNet-110 and PreAct-ResNet-110. 
For WideResNet-28-10, we follow the settings in the original paper~\citep{BMVC2016_87}, where the learning rate is divided by 10 after 60, 120 and 180 epochs. 
Weight decay is set to $2.5^{-4}$ for all the models, and we do not use dropout.

\begin{table}[t]
    \centering
    \begin{tabular}{c|cc}
        \hline
        Model & + Cutout & + \emph{MaxUp}+Cutout \\
        \hline
        ResNet-110 & 94.84 $\pm$ 0.11 & \bf{95.41 $\pm$ 0.08}\\
        PreAct-ResNet-110 & 95.02 $\pm$ 0.15 & \bf{95.52 $\pm$ 0.06}\\ 
        WideResNet-28-10 & 96.92 $\pm$ 0.16 & \bf{97.18 $\pm$ 0.06}\\
        \hline
    \end{tabular}
    \caption{Test accuracy on CIFAR10 for different  architectures.}
    \label{tab:cifar10}
\end{table}

\begin{table}[t]
    \centering
    \begin{tabular}{c|cc}
        \hline
        Model & + Cutout & + \emph{MaxUp}+Cutout \\
        \hline
        ResNet-110 & 73.64 $\pm$ 0.15 & \bf{75.26 $\pm$ 0.21}\\
        PreAct-ResNet-110 & 74.37 $\pm$ 0.13 & \bf{75.63 $\pm$ 0.26}\\ 
        WideResNet-28-10 & 81.59 $\pm$ 0.27 & \bf{82.48 $\pm$ 0.23}\\
        \hline
    \end{tabular}
    \caption{Test accuracy on CIFAR100 for different  architectures.}
    \label{tab:cifar100}
\end{table}

\paragraph{Results}
The results on CIFAR-10 and CIFAR-100 are summarized in Table~\ref{tab:cifar10} and Table~\ref{tab:cifar100}.
We can see that the models trained using \emph{MaxUp}+Cutout significantly outperform the standard Cutout for all the cases. 

On CIAFR-10, 
\emph{MaxUp} improves the standard Cutout baseline from $94.84\%\pm0.11\%$ to $95.41\%\pm0.08\%$ on ResNet-110.
It also improves the accuracy from $95.02\%\pm0.15\%$ to $95.52\%\pm0.06\%$ on  PreAct-ResNet-110.

On CIFAR-100, \emph{MaxUp} obtains improvements by a large margin.
On ResNet-110 and PreAct-ResNet-110, 
\emph{MaxUp} improves the performance of Cutout from $73.64\%\pm0.15\%$ and $74.37\%\pm0.13\%$ to $75.26\%\pm0.21\%$ and $75.63\%\pm0.26\%$, respectively. 
\emph{MaxUp}+Cutout also improves the standard Cutout from $81.59\% \pm 0.27\%$ to $82.48\% \pm 0.23\%$ on  WideResNet-28-10  on CIFAR-100. 

\paragraph{Ablation Study}
We test \emph{MaxUp} with different sample size $m$ and investigate its impact on the performance on 
ResNet-100 (a relatively small model) and WideResNet-28-10 (a larger model). 

Table~\ref{tab:abaltion} shows the result when we vary the sample size in $m \in \{1, 4, 10, 20\}$. 
Note that \emph{MaxUp} reduces to the na\"ive data augmentation method when $m = 1$.  
As shown in Table~\ref{tab:abaltion},  
\emph{MaxUp} with all $m>1$ can improve the result of standard augmentation ($m=1$). 
Setting $m=4$ or $m=10$ achieves best performance on ResNet-110 
, and $m=10$ obtains best performance on WideResNet-28-10. 
We can see that the results are not sensitive once $m$ is in a proper range (e.g., $m\in [4:10]$), and it is easy to outperform the standard data augmentation $(m=1)$ without much tuning of $m$. 
Furthermore,
we suggest to use a large $m$ for large models, and a small $m$  for  relatively small models. 

\begin{table}[t]
    \centering
    \begin{tabular}{c|c|c}
        \hline
        $m$ & ResNet-110 & WideResNet-28-10 \\
        \hline
        1 & 73.64 $\pm$ 0.15 & 81.59 $\pm$ 0.27\\
        4 & 75.26 $\pm$ 0.21 & 81.82 $\pm$ 0.22\\ 
        10 & 75.19 $\pm$ 0.13 & \bf{82.48 $\pm$ 0.23}\\
        20 & 74.37 $\pm$ 0.18 & 82.43 $\pm$ 0.24\\
        \hline
    \end{tabular}
    \caption{Test accuracy on CIFAR100 with ResNet-110 and 
    WideResNet-28-10, when the sample size $m$ varies.} 
    \label{tab:abaltion}
    \vspace{-5pt}
\end{table}

\begin{table*}[tbp]
    \begin{center}
    \setlength{\tabcolsep}{1.2em}
        \begin{tabular}{l|c||cc}
            \hline
            Method & \tt Params & Valid & Test \\
            \hline
            NAS-RNN~\citep{zoph2016neural} & 54M & - & 62.40 \\
            AWD-LSTM~\citep{merity2017awd} & 24M & 58.50 & 56.50 \\
            AWD-LSTM + FRAGE~\citep{gong2018frage} & 24M & 58.10 & 56.10 \\
            AWD-LSTM + MoS~\citep{yang2018breaking} & 22M & 56.54 & 54.44\\
            \hline
            \multicolumn{4}{l}{w/o dynamic evaluation} \\ 
            \hline
            ADV-AWD-LSTM~\citep{wang2019improving} & 24M & 57.15 & 55.01\\
            \bf{ADV-AWD-LSTM + \emph{MaxUp}} & 24M & \bf{56.25} & \bf{54.27}\\
            \hline
            \multicolumn{4}{l}{+~~dynamic evaluation \citep{krause2017dynamic}} \\ 
            \hline
            ADV-AWD-LSTM~\citep{wang2019improving} & 24M & 51.60 & 51.10\\
            \bf{ADV-AWD-LSTM + \emph{MaxUp}}  & 24M & \bf{50.83} & \bf{50.29}\\
            \hline
        \end{tabular}
    \end{center}
\caption{\label{tab:ptb}
Perplexities on the validation and test sets on the Penn Treebank dataset.  Smaller perplexities refer to better language modeling performance. {\tt Params} denotes the number of model parameters.
}
\end{table*}

\begin{table*}[h]
    \begin{center}
    \setlength{\tabcolsep}{1.2em}
        \begin{tabular}{l|c||cc}
            \hline
            Method & \tt Params & Valid & Test \\
            \hline
            AWD-LSTM~\citep{merity2017awd} & 33M & 68.60 & 65.80 \\
            AWD-LSTM + FRAGE~\citep{gong2018frage} & 33M & 66.50 & 63.40 \\
            AWD-LSTM + MoS~\citep{yang2018breaking} & 35M & 63.88 & 61.45\\
            \hline
            \multicolumn{4}{l}{w/o dynamic evaluation} \\ 
            \hline
            ADV-AWD-LSTM~\citep{wang2019improving} & 33M & 63.68 & 61.34 \\
            \bf{ADV-AWD-LSTM + \emph{MaxUp}} & 33M & \bf{62.48} & \bf{60.19}\\
            \hline
            \multicolumn{4}{l}{+~~dynamic evaluation \citep{krause2017dynamic}} \\ 
            \hline
            ADV-AWD-LSTM~\citep{wang2019improving} & 33M & 42.36 & 40.53\\
            \bf{ADV-AWD-LSTM + \emph{MaxUp}}  & 33M & \bf{41.29} & \bf{39.61}\\
            \hline
        \end{tabular}
    \end{center}
\caption{\label{tab:WT2}
Perplexities on the validation and test sets on the WikiText-2 dataset.  Smaller perplexities refer to better language modeling performance. {\tt Params} denotes the number of model parameters. }
\end{table*}

\subsection{Language Modeling}
For language modeling, we test \emph{MaxUp} on two benchmark datasets: Penn Treebank (PTB) and Wikitext-2 (WT2).
We use the code provided by \citet{wang2019improving} as our baseline\footnote{\url{https://github.com/ChengyueGongR/advsoft}}, which stacks a three-layer LSTM
and implements a bag of regularization and optimization tricks
for neural language modeling proposed by \citet{merity2017awd}, such as weight tying, word embedding drop and Averaged SGD. 

For this task, we apply \emph{MaxUp} using  word embedding dropout~\citep{merity2017awd} as the random data augmentation method. 
Word embedding dropout implements dropout on the embedding matrix at the word level,
where the dropout is broadcasted across all the embeddings of all the word vectors. 
For the selected words,
their embedding vectors are set to be zero vectors.
The other word embeddings in the vocabulary are scaled by $\frac{1}{1 - p}$, where $p$ is the probability of embedding dropout.  

As the word embedding layer serves as the first layer in a neural language model,
we apply \emph{MaxUp} in this layer.
We do feed-forward for $m$ times and select the worst case to do backpropagation for each given sentence.
In this section, we set a small $m=2$ since the models are already well-regularized by other regularization techniques.

\paragraph{Implement Details}
The PTB corpus~\citep{marcus1993ptb} is a standard dataset  for benchmarking language models. It consists of 923k training, 73k  validation and 82k test words. 
We use the processed version provided by \citet{mikolov2010recurrent} that is widely used for PTB.

The WT2 dataset is introduced in  ~\citet{merity2017awd} as an alternative to PTB. It contains pre-processed Wikipedia articles, and the training set contains 2 million words.

The  training  procedure can be decoupled into two stages: 1) optimizing the model with SGD and averaged SGD (ASGD);
2) restarting ASGD for fine-tuning twice.
We apply \emph{MaxUp} in both stages, and report the perplexity scores at the end of the second stage.
We also report the perplexity scores with a recently-proposed post-process method, dynamical evaluation ~\citep{krause2017dynamic} after the training process.

\paragraph{Results on PTB and WT2}
The results on PTB and WT2 corpus are illustrated in Table~\ref{tab:ptb} and Table~\ref{tab:WT2}, respectively.
We calculate the perplexity on the validation and test set for each method to evaluate its performance.
We can see 
that \emph{MaxUp}  outperforms the state-of-the-art results achieved by 
Frage~\citep{gong2018frage} and Mixture of SoftMax~\citep{yang2018breaking}. 
We further compare \emph{MaxUp} to the result of ~\citet{wang2019improving} based on AWD-LSTM~\citep{merity2017awd} at two checkpoints, with or without dynamic evaluation~\citep{krause2017dynamic}.
On PTB, we enhance the baseline from $55.01/51.10$ to $54.27/50.29$ at these two checkpoints on the test set.
On WT2, we enhance the baseline from $61.34/40.53$ to $60.19/39.61$ at these two checkpoints on the test set.
Results on validation set are reported in both Table~\ref{tab:ptb} and \ref{tab:WT2} 
to show that the improvement can not achieved by simple hyper-parameter tuning on the test set.

\subsection{Adversarial Certification}
\label{sec:adversarial}
\begin{table*}[htbp]
	\begin{center}
		\begin{tabular}{l|ccccccccccc}
			\hline
            $\ell_2$ RADIUS (CIFAR-10) & $0.25$ & $0.5$ & $0.75$ & $1.0$ & $1.25$ & $1.5$ & $1.75$ & $2.0$ & $2.25$ & $2.5$ & $2.75$ \\
            \hline
            \citet{cohen2019certified} (\%) & 60 & 43 & 34 & 23 & 17 & 14 & 12 & 10 & 8 & 6 & 4\\
            \citet{salman2019provably} (\%) & \bf{74} & \bf{57} & 48 & 38 & 33 & 29 & 25 & 19 & 17 & 14 & 12\\
            \hline
            Ours (\%) & \bf{74} & \bf{57} & \bf{49} & \bf{40} & \bf{35} & \bf{31} & \bf{27} & \bf{22} & \bf{19} & \bf{17} & \bf{15}\\
            \hline
		\end{tabular}
	\end{center}
	\caption{\label{tab:l2-cifar10} \small Certified accuracy on CIFAR-10 of the best classifiers by different methods, evaluated against $\ell_2$ attacks of  
	 different radiuses.}  
	\label{tab:certified}
\end{table*}

Modern image classifiers are known to be sensitive to small, adversarially-chosen perturbations on inputs~\citep{goodfellow2014explaining}.
Therefore, for making high-stakes decisions,  
it is of critical importance to develop methods with \emph{certified robustness}, which 
provide (high probability) provable guarantees 
on the correctness of the prediction subject to arbitrary attacks within certain perturbation ball. 

Recently, \citet{cohen2019certified} proposed 
 to construct certifiably robust classifiers against $\ell_2$ attacks 
 by introducing Gaussian smoothing on the inputs,  
which is shown to outperform all the previous $\ell_2$-robust classifiers in CIFAR-10. 
There has been two major methods for training such smoothed classifiers: 
\citet{cohen2019certified} trains the classifier with  a Gaussian data augmentation technique, while   
\citet{salman2019provably} improves the original Gaussian data augmentation by using PGD (projected gradient descent) adversarial training, 
in which PGD is used to find a local maximal within a given $\ell_2$ perturbation ball.

In our experiment, 
we use \emph{MaxUp} with Gaussian perturbation (referred to as \emph{MaxUp}+Gauss) to train better smoothed classifiers 
than the methods by \citet{cohen2019certified} and \citet{salman2019provably}. 
Like how \emph{MaxUp} improves upon standard data augmentation, 
it is natural to expect that our \emph{MaxUp}+Gauss can learn more robust classifiers than the standard Gaussian data augmentation method in \citet{cohen2019certified}. 



\paragraph{Training Details}
We applied ~\emph{MaxUp} to 
Gaussian augmented data on CIFAR-10 with ResNet-110~\citep{he2016resnet}.  
We follow the training pipelines described in ~\citet{salman2019provably}.
We set a batch size of 256, an initial learning rate of 0.1 which drops by a factor of 10 every 50 epochs, and train the models for 150 epochs.

\paragraph{Evaluation}
After training the smoothed classifiers, 
we evaluation the certified accuracy of different models under different $\ell_2$ perturbation sets.
Given an input image $\boldsymbol{x}$ and a perturbation region $\mathcal{B}$,
the smoothed classifier is called  
certifiably correct if its prediction is correct and has a guaranteed lower bound larger than $0.5$ in $\mathcal{B}$. 
The certified accuracy is the percentage of images 
that are certifiably correct. 
Following \citet{salman2019provably}, 
we calculate the certified accuracy of all the classifiers for various radius and report the best results overall of the classifiers.
We use the codes provided by ~\citet{cohen2019certified} to calculate certified accuracy.\footnote{\url{https://github.com/locuslab/smoothing}}

Following \citet{salman2019provably}, 
we select the best hyperparameters with grid search.
The only two hyperparameters of our \emph{MaxUp}+Gauss are the sample size $m$ and the variance $\sigma^2$ of the Gaussian perturbation, which we search in 
$ m \in \{5, 25, 50, 100, 150\}$ and $\sigma \in \{0.12, 0.25, 0.5, 1.0\}$.
In comparison, 
\citet{salman2019provably} requiers to search a larger number of hyper-parameters, including  
the number of steps of the PGD, 
the number of noise samples,  
the maximum $\ell_2$ perturbation,  
and the variance of Gaussian data augmentation during training and testing. 
Overall, \citet{salman2019provably} requires to train and evaluate  over 150 models for hyperparmeter tuning, 
while \emph{MaxUp}+Gauss  requires only 20 models.

\paragraph{Results}
%
We show the certified accuraries on CIFAR-10 in Table~\ref{tab:certified} under  $\ell_2$ attacks for each $\ell_2$ radius. 
We find that \emph{MaxUp} outperforms  \citet{cohen2019certified} for all the $\ell_2$ radiuses by a large margin.
For example, \emph{MaxUp} can improve the certified accuracy at radius $0.25$ from 60\% to 74\% and improve the 4\% accuracy on  radius $2.75$ to 15\%.
\emph{MaxUp}  also outperforms the PGD-based adversarial training of \citet{salman2019provably} for all the radiuses, boosting the accuracy from 14\% to 17\% at radius $2.5$, and from 12\% to 15\% at radius $2.75$.


In summary, 
\emph{MaxUp} clearly outperforms both \citet{cohen2019certified}  
and 
\citet{salman2019provably}. 
\emph{MaxUp} is also much faster and requires less hyperparameter tuning than \citet{salman2019provably}.   
Although the PGD-based method of \citet{salman2019provably} was designed to outperform the original method by  \citet{cohen2019certified}, 
\emph{MaxUp}+Gauss further  improves upon \citet{salman2019provably}, likely because \emph{MaxUp} with Gaussian perturbation is more compatible with the Gaussian smoothing based certification of \citet{cohen2019certified} than PGD adversarial optimization. 


\section{Conclusion}
In this paper, 
we propose \emph{MaxUp}, a simple and efficient training algorithms for improving generalization,   
especially for deep neural networks.
\emph{MaxUp} can be viewed as a introducing a gradient-norm smoothness regularization for Gaussian perturbation, but does not require to evaluate the gradient norm explicitly,  and 
can be easily combined with any existing data augmentation methods. 
We empirically show that 
\emph{MaxUp} can improve the performance of data augmentation methods in image classification, language modeling, and certified defense. 
Especially, we achieve SOTA performance on ImageNet.

For future works, we will apply \emph{MaxUp} to more applications and models, such as  BERT~\citep{devlin2018bert}.
Furthermore, 
we will generalize \emph{MaxUp} to apply mild adversarial optimization on  feature and label spaces  for other challenging tasks in machine learning, including transfer learning, semi-supervised learning.


\bibliographystyle{icml2020}
\bibliography{references}
\end{document}